\title{Avoiding Side Effects By Considering Future Tasks}
\author{
Victoria Krakovna\thanks{Corresponding author: \texttt{vkrakovna@google.com}}\\
DeepMind\\
\And
Laurent Orseau\\
DeepMind\\
\And
Richard Ngo\\
DeepMind\\
\And
Miljan Martic\\
DeepMind\\
\And
Shane Legg\\
DeepMind\\
}
\theoremstyle{definition}
\newtheorem{proposition}{Proposition}
\newtheorem{example}{Example}
\newtheorem{definition}{Definition}
\newcommand{\base}{s'_t}
\newcommand{\cur}{s_t}
\newcommand{\start}{s_0}
\newcommand{\nxt}{s_{t+1}}
\newcommand{\baseT}{s'_T}
\newcommand{\curT}{s_T}
\newcommand{\nxtT}{s_{T+1}}
\newcommand{\actT}{a_T}
\newcommand{\trans}{p}
\newcommand{\rew}{r}
\newcommand{\act}{a_t}
\newcommand{\states}{\mathcal{S}}
\newcommand{\actions}{\mathcal{A}}
\newcommand{\goal}{g_i}
\newcommand{\sizeA}{0.1}
\definecolor{lightblue}{HTML}{00aeff}
\definecolor{darkblue}{HTML}{0879bd}
\definecolor{seagreen}{HTML}{3CB371}
\definecolor{darkgreen}{HTML}{008000}
\definecolor{darkred}{HTML}{CC0000}
\newcommand{\bad}{\textcolor{darkred}{1}}
\newcommand{\good}{\textcolor{darkblue}{0}}
\newcommand{\inter}{\text{int}}
\newcommand{\val}{V^*}
\newcommand{\vnr}{W}
\newcommand{\raux}{\rew_\text{aux}}
\DeclareMathOperator*{\expect}{\mathbb{E}}
\DeclareMathOperator*{\prob}{\mathbb{P}}
\DeclareMathOperator*{\reals}{\mathbb{R}}
\algnewcommand{\IfThenElse}[3]{
  \State \algorithmicif\ #1\ \algorithmicthen\ #2\ \algorithmicelse\ #3}
\algnewcommand{\IfElse}[3]{
  \State #1\ \algorithmicif\ #2\ \algorithmicelse\ #3}
\begin{document}
\maketitle

\begin{abstract}
Designing reward functions is difficult: the designer has to specify what to do (what it means to complete the task) as well as what not to do (side effects that should be avoided while completing the task). To alleviate the burden on the reward designer, we propose an algorithm to automatically generate an auxiliary reward function that penalizes side effects. This auxiliary objective rewards the ability to complete possible future tasks, which decreases if the agent causes side effects during the current task. The future task reward can also give the agent an incentive to interfere with events in the environment that make future tasks less achievable, such as irreversible actions by other agents. To avoid this interference incentive, we introduce a baseline policy that represents a default course of action (such as doing nothing), and use it to filter out future tasks that are not achievable by default. We formally define interference incentives and show that the future task approach with a baseline policy avoids these incentives in the deterministic case. Using gridworld environments that test for side effects and interference, we show that our method avoids interference and is more effective for avoiding side effects than the common approach of penalizing irreversible actions. 
\end{abstract}

\section{Introduction}\label{sec:intro}

Designing reward functions for a reinforcement learning agent is often a difficult task. One of the most challenging aspects of this process is that in addition to specifying what to do to complete a task, the reward function also needs to specify what \emph{not} to do. For example, if an agent's task is to carry a box across the room, we want it to do so without breaking a vase in its path, while an agent tasked with eliminating a computer virus should avoid unnecessarily deleting files.

This is known as the side effects problem~\citep{AmodeiOlah16}, which is related to the frame problem in classical AI~\citep{Mccarthy69}. The frame problem asks how to specify the ways an action does not change the environment, and poses the challenge of specifying all possible non-effects. The side effects problem is about avoiding unnecessary changes to the environment, and poses the same challenge of considering all the aspects of the environment that the agent should not affect. Thus, developing an extensive definition of side effects is difficult at best. 

The usual way to deal with side effects is for the designer to manually and incrementally modify the reward function to steer the agent away from undesirable behaviours. However, this can be a tedious process and this approach does not avoid side effects that were not foreseen or observed by the designer. To alleviate the burden on the designer, we propose an algorithm to generate an auxiliary reward function that penalizes side effects, which computes the reward automatically as the agent learns about the environment.

A commonly used auxiliary reward function is reversibility~\citep{Moldovan12,Eysenbach17}, which rewards the ability to return to the starting state, thus giving the agent an incentive to avoid irreversible actions. However, if the task requires irreversible actions (e.g. making an omelette requires breaking some eggs), an auxiliary reward for reversibility is not effective for penalizing side effects. The reversibility reward is not sensitive to the magnitude of the effect: the actions of breaking an egg or setting the kitchen on fire would both receive the same auxiliary reward. Thus, the agent has no incentive to avoid unnecessary irreversible actions if the task requires some irreversible actions.

Our main insight is that side effects matter because we may want the agent to perform other tasks after the current task in the same environment. 
We represent this by considering the current task as part of a sequence of unknown tasks with different reward functions in the same environment. To simplify, we only consider a sequence of two tasks, where the first task is the current task and the second task is the unknown future task. Considering the potential reward that could be obtained on the future task leads to an auxiliary reward function that tends to penalize side effects. The environment is not reset after the current task: the future task starts from the same state where the current task left off, so the consequences of the agent's actions matter. Thus, if the agent breaks the vase, then it cannot get reward for any future task that involves the vase, e.g. putting flowers in the vase (see Figure \ref{fig:future_tasks}). This approach reduces the complex problem of defining side effects to the simpler problem of defining possible future tasks. We use a simple uniform prior over possible goal states to define future tasks.



Simply rewarding the agent for future tasks poses a new challenge in dynamic environments. If an event in the environment that would make these future tasks less achievable by default, the agent has an incentive to interfere with it in order to maximize the future task reward. For example, if the environment contains a human eating food, any future task involving the food would not be achievable, and so the agent has an incentive to take the food away from the human. 
We formalize the concept of \emph{interference incentives} in Section \ref{sec:inter}, which was introduced informally in~\citep{Krakovna19}.
To avoid interference, we introduce a \emph{baseline policy} (e.g. doing nothing) that represents a default course of action and acts as a filter on future tasks that are achievable by default. We modify the future task reward so that it is maximized by following the baseline policy on the current task. The agent thus becomes indifferent to future tasks that are not achievable after running the baseline policy. 

Our contributions are as follows. We formalize the side effects problem in a simple yet rich setup, where the agent receives automatic auxiliary rewards for unknown future tasks (Section \ref{sec:future-tasks}). We formally define interference incentives (Section \ref{sec:inter}) and show that the future task approach with a baseline policy avoids these incentives in the deterministic case (Section \ref{sec:ft-base}).
This provides theoretical groundwork for defining side effects that was absent in related previous work~\citep{Krakovna19,Turner20}.
We implement the future task auxiliary reward using universal value function approximators (UVFA)~\citep{Schaul15} to simultaneously estimate the value functions for future tasks with different goal states.
We then demonstrate the following on gridworld environments (Section \ref{sec:experiments}): \footnote{Code: \url{github.com/deepmind/deepmind-research/tree/master/side_effects_penalties}}
\begin{enumerate}
    \item Reversibility reward fails to avoid side effects if the current task requires irreversible actions.
    \item Future task reward without a baseline policy shows interference behavior in a dynamic environment.
    \item Future task reward with a baseline policy successfully avoids side effects and interference. 
\end{enumerate}



\section{Future task approach}\label{sec:future-tasks}

\begin{figure}[t]
\centering 
\begin{subfigure}[t]{0.47\textwidth}
\includegraphics[width=\textwidth]{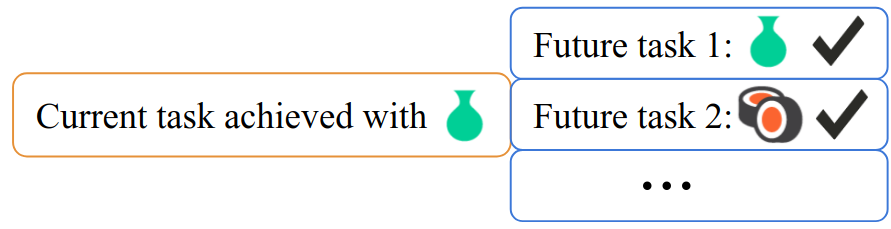}
\end{subfigure}\hfill
\begin{subfigure}[t]{0.47\textwidth}
\includegraphics[width=\textwidth]{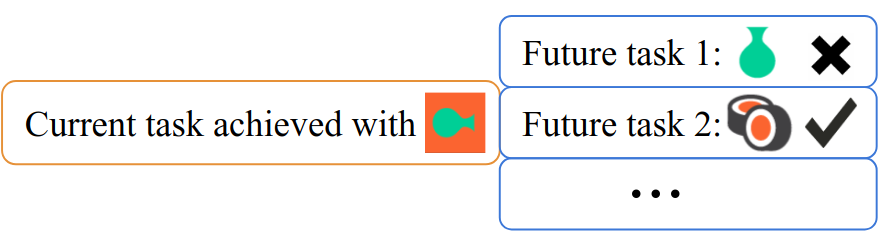}
\end{subfigure}
\caption{Future task approach}
\label{fig:future_tasks}
\end{figure}

\textbf{Notation.} We assume that the environment is a discounted Markov Decision Process (MDP), defined by a tuple $(\states, \actions, \rew, \trans, \gamma, \start)$. $\states$ is the set of states, $\actions$ is the set of actions, $r: \states \rightarrow \reals$ is the reward function for the current task, $\trans(\nxtT|\curT, \actT)$ is the transition function, $\gamma \in (0,1)$ is the discount factor, and $\start$ is the initial state. At time step $T$, the agent receives state $\curT$ and reward $\rew(\curT) + \raux(\curT)$, where $\raux$ is the auxiliary reward for future tasks, and outputs action $\actT$ drawn from its policy $\pi(\actT|\curT)$.

\textbf{Basic approach.} We define the auxiliary reward $\raux$ as the value function for future tasks as follows (see Algorithm \ref{algo:ft}).
At each time step $T$, the agent simulates an interaction with hypothetical future tasks if $s_T$ is terminal and with probability $1-\gamma$ otherwise (interpreting the discount factor $\gamma$ as the probability of non-termination, as done in \cite{Sutton18}). 
A new task $i$ is drawn from a future task distribution $F$. In this paper, we use a uniform distribution over future tasks with all possible goal states, $F(i) = 1/|\states|$. Future task $i$ requires the agent to reach a terminal goal state $\goal$, with reward function $r_i(\goal)=1$ and $r_i(s)=0$ for other states $s$.
The new task is the MDP $(\states, \actions, \rew_i, \trans, \gamma, \curT)$, where the starting state is the current state $\curT$. 
The auxiliary reward for future tasks is then 
\begin{equation}
\raux(\curT) = \beta D(\curT) \sum_i F(i) \val_i(\curT) 
\label{eq:raux}
\end{equation}
where $D(\curT)=1$ if $\curT$ is terminal and $1-\gamma$ otherwise.  
Here, $\beta$ represents the importance of future tasks relative to the current task. We choose the highest value of $\beta$ that still allows the agent to complete the current task. $\val_i$ is the optimal value function for task $i$, computed using the \emph{goal distance} $N_i$:
\begin{equation}
\val_i(s) = \expect[\gamma^{N_i(s)}]
\label{eq:vi}
\end{equation}

\begin{definition}[Goal distance]
Let $\pi_i^*$ be the optimal policy for reaching the goal state $\goal$. Let the \emph{goal distance} $N_i(s)$ be the number of steps it takes $\pi_i^*$ to reach $\goal$ from state $s$. This is a random variable whose distribution is computed by summing over all the trajectories $\tau$ from $s$ to $\goal$ with the given length: $\prob(N_i(s)=n) = \sum_\tau \prob(\tau) \mathbb{I}(|\tau|=n)$. Here, a trajectory $\tau$ is a sequence of states and actions that ends when $\goal$ is reached, the length $|\tau|$ is the number of transitions in the trajectory, and $\prob(\tau)$ is the probability of $\pi_i^*$ following $\tau$. 
\end{definition}




\textbf{Binary goal-based rewards assumption.} We expect that the simple future task reward functions given above ($r_i(\goal)=1$ and $0$ otherwise)
are sufficient to cover a wide variety of future goals and thus effectively penalize side effects. More complex future tasks can often be decomposed into such simple tasks, e.g. if the agent avoids breaking two different vases in the room, then it can also perform a task involving both vases. Assuming binary goal-based rewards simplifies the theoretical arguments in this paper while allowing us to cover the space of future tasks.

\textbf{Connection to reversibility.} An auxiliary reward for avoiding irreversible actions~\citep{Eysenbach17} is equivalent to the future task auxiliary reward with only one possible future task ($i=1$), where the goal state $g_1$ is the starting state $s_0$. Here $F(1)=1$ and $F(i)=0$ for all $i>1$.
The future task approach incorporates the reversibility reward as a future task $i$ whose goal state is the initial state ($g_i = s_0$), since the future tasks are sampled uniformly from all possible goal states. Thus, the future task approach penalizes all the side effects that are penalized by the reversibility reward.



\section{Interference incentives}\label{sec:inter}
We show that the basic future task approach given in Section \ref{sec:future-tasks} introduces interference incentives, as defined below.
Let a baseline policy $\pi'$ represent a default course of action, such as doing nothing. 
We assume that the agent should only deviate from the default course of action in order to complete the current task. 
Interference is a deviation from the baseline policy for some other purpose than the current task, e.g. taking the food away from the human. 
We say that an auxiliary reward $\raux$ induces an \emph{interference incentive} iff the baseline policy is not optimal in the initial state for the auxiliary reward in the absence of task reward.
We now define the concept of interference more precisely. 




\begin{definition}[No-reward MDP]
We modify the given MDP $\mu$ by setting the reward function to 0:
$\mu_0 = (\states, \actions, \rew_0, \trans, \gamma, \start)$, where $\rew_0(s)=0$ for all $s$. 
Then the agent receives only the auxiliary reward $ \raux$.
\end{definition}

\begin{definition}[No-reward value]
Given an auxiliary reward $\raux$, the value function of a policy $\pi$ in the no-reward MDP $\mu_0$ is
$$\vnr_\pi(\curT) = \expect \left[\sum_{k=0}^\infty \gamma^k \raux(s_{T+k}) \right]
= \raux(\curT) + \gamma \sum_{\actT} \pi(\actT|\curT) \sum_{\nxtT}  \trans(\nxtT|\curT, \actT) \vnr_\pi(\nxtT).$$
\end{definition}

\begin{definition}[Interference incentive]
There is an interference incentive if there exists a policy $\pi_\inter$ such that $\vnr_{\pi_\inter}(\start) > \vnr_{\pi'}(\start)$.
\end{definition}

The future task auxiliary reward will introduce an interference incentive unless the baseline policy is optimal for this auxiliary reward. 

\begin{example}\label{ex:inter}
Consider a deterministic MDP with two states $x_0$ and $x_1$ and two actions \texttt{a} and \texttt{b}, where $x_0$ is the initial state. 
Suppose the baseline policy $\pi'$ always chooses action \texttt{a}.  

\begin{figure}[h]
\centering
\begin{tikzpicture}[auto]
    \node (s0) at (0, 0) {$x_0$};
    \node (s1) at (2, 0) {$x_1$};
    \draw (s0) edge[->, >=latex] node{\texttt{a}} (s1);
    \draw (s0) edge[->, out=180, in=270, looseness=5] node[left]{\texttt{b}} (s0);
    \draw (s1) edge[->, out=180, in=270, looseness=5] node[left]{\texttt{a},\texttt{b}} (s1);
\end{tikzpicture}
\caption{MDP for Example~\ref{ex:inter}.}
\end{figure}
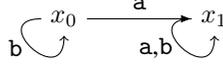

We show that for most future task distributions, the future task auxiliary reward induces an interference incentive: staying in $x_0$ has a higher no-reward value than following the baseline policy (see Appendix~\ref{app:inter}).
\end{example}

\begin{figure}[b]
\centering
    \begin{tikzpicture}[auto]
    \node (s0) at (0, 2) {$\start$};
    \node (st) at (-1.5, 0) {$\curT$};
    \node (in) at (1.5, 0) {$s'_T$};
    \node (g) at (0, -2) {$\goal$};
    \node (c) at (0, 0.7) {current task};
    \node (f) at (0, -0.7) {\textcolor{darkblue}{future task $i$}};
    \draw (s0) edge[->, >=latex, thick] node[left]{agent policy $\pi$} (st);
    \draw (s0) edge[->, >=latex, thick, dashed, color=gray] node{baseline policy $\pi'$}  (in);
    \draw (st) edge[->, >=latex, thick, color=darkblue] node[left]{agent following $\pi_i^*$} (g); 
    \draw (in) edge[->, >=latex, thick, color=darkblue] node{reference agent following $\pi_i^*$}   (g);  
    \end{tikzpicture}
    \caption{Future task approach with a baseline policy. The hypothetical agent runs on future task $i$ are shown in blue.}
  \label{fig:future_tasks_baseline}
\end{figure}
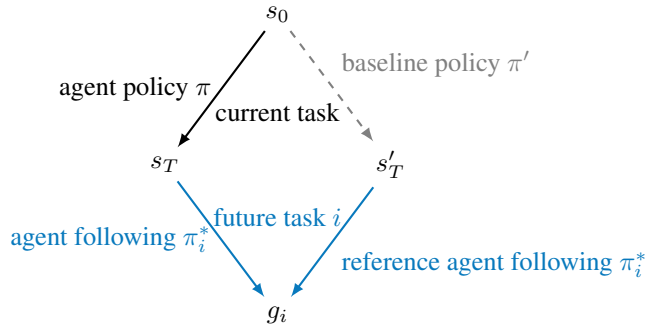


\section{Future task approach with a baseline policy} \label{sec:ft-base}
To avoid interference incentives, we modify the auxiliary reward given in Section \ref{sec:future-tasks} so that it is maximized by following the baseline policy $\pi'$: the agent receives full auxiliary reward if it does at least as well as the baseline policy on the future task (see Algorithm \ref{algo:ft-base}, with modifications in red).
Suppose the agent is in state $\curT$ after $T$ time steps on the current task. We run the baseline policy from $s_0$ for the same number of steps, reaching state $s'_T$. Then a future task $i$ is sampled from $F$, and we hypothetically run two agents following $\pi_i^*$ in parallel: 
our agent starting at $\curT$ and a reference agent starting at $s'_T$, both seeking the goal state $\goal$. If one agent reaches $\goal$ first, it stays in the goal state and waits for the other agent to catch up. Denoting the agent state $s_t$ and the reference agent state $s'_t$, we define $r_i(\cur,\base) = 1$ if $\cur=\base = \goal$, and $0$ otherwise, so $r_i$ becomes a function of $\cur'$. Thus, our agent only receives the reward for task $i$ if it has reached $\goal$ and the reference agent has reached it as well. The future task terminates when both agents have reached $\goal$. We update the auxiliary reward from equation (\ref{eq:raux}) as follows:
\begin{equation}
\raux(\curT,\baseT) = \beta D(\curT) \sum_i F(i) \val_i(\curT,\baseT)
\end{equation}
Here, we replace $\val_i(\cur)$ given in equation (\ref{eq:vi}) with a value function $\val_i(\cur,\base)$ that depends on the reference state $\base$ and satisfies the following conditions. 
If $\cur=\base=\goal$, it satisfies the goal condition $\val_i(\cur,\cur') = \rew_i(\cur,\cur') = 1$. Otherwise, it satisfies the following Bellman equation,
\begin{equation}
\val_i(\cur,\cur') = r_i(\cur, \cur') +
 \gamma \max_{\act \in \actions} \sum_{\nxt\in\states} \trans(\nxt |\cur, \act) 
 \sum_{\nxt'\in\states} \trans(\nxt' |\cur', \act')  \val_i(\nxt, \nxt')
 \label{eq:bellman}
\end{equation}
where $\act'$ is the action taken by $\pi_i^*$ in state $s'_t$. 
We now provide a closed form for the value function and show it converges to the right values (see proof in Appendix \ref{app:prop1}):

\begin{proposition}[Value function convergence]
The following formula for the optimal value function satisfies the above goal condition and Bellman equation (\ref{eq:bellman}):
$$\val_i(\cur,\cur') = \expect \left[\gamma^{\max(N_i(\cur),N_i(\cur'))}\right]
= \sum_{n=0}^\infty \prob(N_i(\cur)=n) \sum_{n'=0}^\infty \prob(N_i(\cur')=n') \gamma^{\max(n,n')} $$
\end{proposition}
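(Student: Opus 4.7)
The plan is to check the two conditions in turn. The goal condition is immediate: when $s_t = s'_t = g_i$ we have $N_i(g_i) = 0$, so $\gamma^{\max(0,0)} = 1 = r_i(g_i, g_i)$, matching the required boundary value.

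For the Bellman equation I would exploit the one-step recursion for the goal distance. By definition of $\pi_i^*$, together with the convention that $g_i$ is absorbing and $N_i(g_i) = 0$, whenever $s_t \neq g_i$ we have the distributional identity $N_i(s_t) \stackrel{d}{=} 1 + N_i(s_{t+1})$ with $s_{t+1} \sim p(\cdot \mid s_t, \pi_i^*(s_t))$, while if $s_t = g_i$ the policy keeps $s_{t+1} = g_i$ and both sides are $0$. Since the agent's and reference agent's one-step transitions are conditionally independent, this lifts to the joint identity
\[
\max\bigl(N_i(s_t), N_i(s'_t)\bigr) \stackrel{d}{=} 1 + \max\bigl(N_i(s_{t+1}), N_i(s'_{t+1})\bigr)
\]
whenever at least one of the two coordinates is non-goal (the $+1$ still appears even if the other coordinate has already reached $g_i$, because it stays there and contributes $0$ to the max). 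Taking expectations of $\gamma^{\max(\cdot,\cdot)}$ and dividing through by $\gamma$ then produces the Bellman identity evaluated at the particular action $a = \pi_i^*(s_t)$. The two edge cases where exactly one of $s_t, s'_t$ equals $g_i$ can then be checked by direct substitution into the closed form, using the single-agent recursion on the non-goal side.

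What remains is to discharge the $\max_a$, i.e.\ to show that $\pi_i^*(s_t)$ is actually a maximizer. I would condition on the reference agent's (independent) transition: for each fixed value $n'$ of $N_i(s'_{t+1})$, the map $n \mapsto \gamma^{\max(n, n')}$ is non-increasing, so the agent wants to push the distribution of $N_i(s_{t+1})$ toward small values, which is precisely what $\pi_i^*(s_t)$ does by definition. When $s_t = g_i$ the successor is forced to be $g_i$ so any action is trivially optimal. The main obstacle is this optimality step: in the deterministic MDPs that the paper's later theoretical results focus on, $N_i$ is a deterministic quantity that $\pi_i^*$ minimizes pointwise and the claim is immediate, whereas in general one needs first-order stochastic dominance of $N_i(s_{t+1})$ under $\pi_i^*$, which is strictly stronger than optimality for $\mathbb{E}[\gamma^{N_i(s_{t+1})}]$ and would require either additional structural assumptions or a direct dynamic-programming argument bypassing the closed form. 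Once this optimality is established, the rest of the proof is just the substitution and case split above.
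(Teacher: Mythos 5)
Your proposal follows essentially the same route as the paper's proof: verify the goal condition from $N_i(g_i)=0$, establish the Bellman property by decomposing the goal-distance distribution through the first transition of $\pi_i^*$ (the $+1$ inside the $\max$), and handle separately the cases where exactly one of $s_t, s'_t$ is the goal. The one place you go beyond the paper is in worrying about discharging the $\max_a$: the paper's reindexing identity simply rewrites $\sum_m \prob(N_i(s_t)=m)\gamma^{\max(m,k)}$ as a maximum over actions of the one-step decomposition, which tacitly assumes that the first action of $\pi_i^*$ attains that maximum for every fixed $k$. Your observation is accurate that this is automatic in the deterministic case (the setting actually used for Proposition 2) but in general stochastic environments requires the successor distribution of $N_i$ under $\pi_i^*$ to be first-order stochastically dominant, which is strictly stronger than optimality for $\expect[\gamma^{N_i}]$; the paper asserts the identity without addressing this, so your flagged ``obstacle'' is a genuine subtlety of the published argument rather than a defect of your approach relative to it. Aside from making that caveat explicit, your substitution and case analysis match the paper's proof.
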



In the deterministic case, $\val_i(\cur,\base) = \gamma^{\max(n,n')} = \min(\gamma^n, \gamma^{n'}) = \min(\val_i(\cur), \val_i(\base))$, where $n=N_i(\cur)$ and $n'=N_i(\base)$. In this case, the auxiliary reward produces the same incentives as the relative reachability penalty~\citep{Krakovna19}, given by $\max(0, \gamma^{n'}-\gamma^{n})=\gamma^{n'} - \min(\gamma^n, \gamma^{n'})$.
We show that it avoids interference incentives (see proof in Appendix \ref{app:prop2} and discussion of the stochastic case in Appendix \ref{app:stochastic}):

\begin{proposition}[Avoiding interference in the deterministic case]
For any policy $\pi$ in a deterministic environment, the baseline policy $\pi'$ has the same or higher no-reward value: $\vnr_{\pi}(\start) \leq \vnr_{\pi'}(\start)$.
\end{proposition}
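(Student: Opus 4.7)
The plan is to establish a per-step pointwise comparison: in the deterministic case, the agent's auxiliary reward at each time step is bounded above by a quantity that depends only on the baseline's trajectory, and this bound is saturated when the agent itself follows the baseline.

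First, I would invoke Proposition 1 specialized to the deterministic setting, which gives $\val_i(\cur, \base) = \min(\val_i(\cur), \val_i(\base)) \leq \val_i(\base)$ for every future task $i$, with equality when $\cur = \base$. Weighting over $i$ by the distribution $F$ and multiplying by the step factor $\beta D(\curT)$ converts this into a pointwise bound $\raux(\curT, \baseT) \leq \beta D(\curT) \sum_i F(i) \val_i(\baseT)$ whose right-hand side depends on the agent only through the factor $D(\curT)$.

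Second, I would exploit the determinism of both the transition dynamics and the baseline policy $\pi'$: the reference trajectory $(\baseT)_{T \geq 0}$ is then a fixed deterministic sequence, independent of $\pi$. Under the baseline policy itself, $\curT = \baseT$ at every step, so the per-step bound is tight and reproduces the baseline's per-step reward. Plugging the pointwise inequality into $\vnr_\pi(\start) = \expect_\pi[\sum_{T \geq 0} \gamma^T \raux(\curT, \baseT)]$ and taking the expectation through the discounted sum turns the right-hand side into the deterministic expression for $\vnr_{\pi'}(\start)$, yielding the claimed inequality.

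The main obstacle I anticipate is handling the factor $D(\curT)$, which depends on whether the agent's state is terminal and is therefore trajectory-dependent. Reconciling it with the deterministic $D(\baseT)$ on the right-hand side requires either a convention on terminal states (e.g.\ treating them as absorbing so that $D$ stabilizes at $1$) or a short case analysis leveraging $\val_i \geq 0$ and $D \leq 1$; once that is settled, the inequality collapses directly from the min-structure provided by Proposition 1.
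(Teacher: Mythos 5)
Your proposal is correct and follows essentially the same route as the paper: the paper's proof is exactly the pointwise per-step bound $\val_i(\curT,\baseT)=\gamma^{\max(N_i(\curT),N_i(\baseT))}\leq\gamma^{N_i(\baseT)}=\val_i(\baseT,\baseT)$ (your $\min$ formulation), weighted by $F$ and $\beta$, then summed through the discounted series to recover $\vnr_{\pi'}(\start)$. The $D(\curT)$ factor you flag as the main obstacle is simply taken to be $1-\gamma$ in every term of the paper's own proof, so the paper sidesteps rather than resolves that point.
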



\begin{figure}[t]
\begin{minipage}[t]{0.46\textwidth}
\begin{algorithm}[H]
\caption{Basic future task approach}\label{algo:ft}
\begin{algorithmic}[1]
\Function{FTR}{$T, \curT$}
    \State \textit{Compute future task reward:}
    \State Draw future task $i \sim F$
    \For {$t = T$ {\bfseries to} $T+T_\text{max}$}
    	\If {$s_t=g_i$}
    	    \State \textbf{return} discounted reward $\gamma^t$
    	\Else
        	\State $a_t \sim \pi_i^*(s_t)$, $s_{t+1} \sim p(s_t, a_t)$ 
        \EndIf
	\EndFor
	\State \textit{Goal state not reached:}
	\State \textbf{return} 0
\EndFunction
\State
\For {$T=0$ {\bfseries to} $T_\text{max}$}
    \State \textit{Hypothetical interaction with future}
    \State \textit{tasks to estimate auxiliary reward:}
    \textcolor{darkblue}{
    \State $R := 0$
    \For {$j = 0$ {\bfseries to} $N_\text{samples}$}
        \State $R := R + \Call{FTR}{T, \curT}$
    \EndFor
    \IfElse{$D=1$}{$\curT$ is terminal}{$1-\gamma$}
	\State $\raux(\curT) := \beta D R / N_\text{samples}$ }
	\State \textit{Interaction with the current task:}
	\State $r_T := \rew(\curT) + \raux(\curT)$
    \If {$\curT$ is terminal}
        \State \textbf{break}
    \Else
	    \State $a_T \sim \pi(\curT)$, $\nxtT \sim p(\curT, a_T)$
	\EndIf
\EndFor
\State \textbf{return} agent trajectory $s_0, a_0, r_0, s_1, \dots$
\end{algorithmic}
\end{algorithm}
\end{minipage}\hfill
\begin{minipage}[t]{0.5\textwidth}
\begin{algorithm}[H]
\caption{Future task approach with a baseline}\label{algo:ft-base}
\begin{algorithmic}[1]
\Function{FTR}{$T, \curT, \textcolor{darkred}{s'_T}$}
    \State Draw future task $i \sim F$
    \For {$t = T$ {\bfseries to} $T+T_\text{max}$}
    	\If {\textcolor{darkred}{$s_t=s'_t=g_i$}}
    	    \State \textbf{return} discounted reward $\gamma^t$
        \EndIf
        \If {$s_t \not= g_i$} \State $a_t \sim \pi_i^*(s_t)$, $s_{t+1} \sim p(s_t, a_t)$ \EndIf
        \textcolor{darkred}{\If {$s'_t \not= g_i$}
            \State $a'_t \sim \pi_i^*(s'_t)$, $s'_{t+1} \sim p(s'_t, a'_t)$ 
        \EndIf}
	\EndFor
	\State \textbf{return} 0
\EndFunction
\State
\State $s'_0 := s_0$
\For {$T=0$ {\bfseries to} $T_\text{max}$}
    \State \textit{Hypothetical future task interaction:}
    \textcolor{darkblue}{
    \State $R := 0$
    \For {$j = 0$ {\bfseries to} $N_\text{samples}$}
        \State $R := R + \Call{FTR}{T, \curT, \baseT}$
    \EndFor
    \IfElse{$D=1$}{$\curT$ is terminal}{$1-\gamma$}
    \State $\raux(\curT) := \beta D R / N_\text{samples}$ }	
	\State \textit{Interaction with the current task:}
	\State $r_T := \rew(\curT) + \raux(\curT)$
    \If{$\curT$ is terminal} 
        \State \textbf{break} 
    \Else
        \State $a_T \sim \pi(\curT)$, $\nxtT \sim p(\curT, a_T)$
    	\textcolor{darkred}{\State $a'_T \sim \pi'(s'_T)$, $s'_{T+1} \sim p(s'_T, a'_T)$}
    \EndIf
\EndFor
\State \textbf{return} agent trajectory $s_0, a_0, r_0, s_1, \dots$
\end{algorithmic}
\end{algorithm}
\end{minipage}
\end{figure}

\textbf{Role of the baseline policy.} The baseline policy is intended to represent what happens by default, rather than a safe course of action or an effective strategy for achieving a goal (so the baseline is task-independent). While a default course of action (such as doing nothing) can have bad outcomes, the agent does not cause these outcomes, so they don't count as side effects of the agent's actions. 
The role of the baseline policy is to filter out these outcomes that are not caused by the agent, in order to avoid interference incentives.

In many settings it may not be obvious how to set the baseline policy. For example, \citet{Armstrong17} define doing nothing as equivalent to switching off the agent, which is not straightforward to represent as a policy in environments without a given \texttt{noop} action.  
The question of choosing a baseline policy is outside the scope of this work, which assumes that this policy is given, but we look forward to future work addressing this point.

\section{Key differences from related approaches}

The future task approach is similar to relative reachability~\citep{Krakovna19} and attainable utility~\citep{Turner20}. 
These approaches provided an intuitive definition of side effects in terms of the available options in the environment, an intuitive concept of interference, and somewhat ad-hoc auxiliary rewards that work well in practice on gridworld environments~\citep{Turner20complex}. We follow a more principled approach to create some needed theoretical grounding for the side effects problem by deriving an optionality-based auxiliary reward from simple assumptions and a formal definition of interference.


The above approaches use a baseline policy in a \emph{stepwise} manner, applying it to the previous state $s_{T-1}$ ($\baseT = \curT^\text{step}$), while the future task approach runs the baseline policy from the beginning of the episode ($\baseT = \curT^\text{init}$). We refer to these two options as \emph{stepwise mode} and \emph{initial mode}, shown in Figure \ref{fig:baselines}. We will show that the stepwise mode can result in failure to avoid delayed side effects. 

By default, an auxiliary reward using the stepwise mode does not penalize delayed side effects. For example, if the agent drops a vase from a high-rise building, then by the time the vase reaches the ground and breaks, the broken vase will be the default outcome. Thus, the stepwise mode is usually used in conjunction with \emph{inaction rollouts}~\citep{Turner20} in order to penalize delayed side effects. An inaction rollout uses an environment model to roll out the baseline policy into the future. Inaction rollouts from $\curT$ or $s'_T$ are compared to identify delayed effects of the agent's actions (see Appendix \ref{app:rollouts}).

While inaction rollouts are useful for penalizing delayed side effects, we will demonstrate that they miss some of these effects. In particular, if the task requires an action that has a delayed side effect, then the stepwise mode will give the agent no incentive to undo the delayed effect after the action is taken. We illustrate this with a toy example.

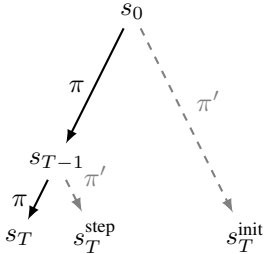
\begin{figure}[ht]
\begin{minipage}[b]{0.35\textwidth}
\begin{figure}[H]
\centering
    \begin{tikzpicture}[auto]
    \node (s0) at (0, 3) {$\start$};
    \node (st-1) at (-1, 1) {$s_{T-1}$};
    \node (st) at (-1.5, 0) {$\curT$};
    \node (step) at (-0.5, 0) {$\curT^\text{step}$};
    \node (in) at (1.5, 0) {$\curT^\text{init}$};
    \draw (s0) edge[->, >=latex, thick] node[left]{$\pi$} (st-1);
    \draw (st-1) edge[->, >=latex, thick] node[left]{$\pi$}  (st);
    \draw (s0) edge[->, >=latex, thick, dashed, color=gray] node{$\pi'$}  (in);
    \draw (st-1) edge[->, >=latex, thick, dashed, color=gray] node{$\pi'$} (step);
    \end{tikzpicture}
  \caption{The initial mode (used in the future task approach) produces the baseline state $s^\text{init}_T$, while the stepwise mode produces the baseline state $s^\text{step}_T$. 
  \label{fig:baselines}}
\end{figure}
\end{minipage}\hfill
\begin{minipage}[b]{0.61\textwidth}
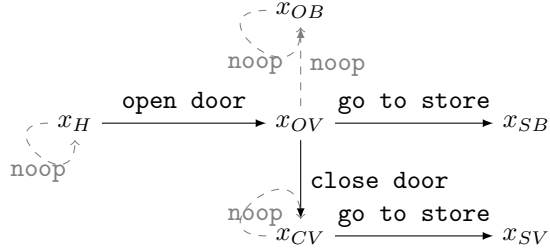
\begin{figure}[H]
\begin{tikzpicture}[auto]
    \node (s0) at (0, 1) {$x_H$};
    \node (s1) at (3, 1) {$x_{OV}$};
    \node (s2) at (6, 1) {$x_{SB}$};
    \node (s3) at (3, 2.5) {$x_{OB}$};
    \node (s4) at (3, -.5) {$x_{CV}$};
    \node (s5) at (6, -.5) {$x_{SV}$};
    \draw (s0) edge[->, >=latex] node{\texttt{open door}} (s1);
    \draw (s1) edge[->, >=latex] node{\texttt{go to store}} (s2);
    \draw (s1) edge[->, >=latex, dashed, color=gray] node[right]{\texttt{noop}} (s3);
    \draw (s1) edge[->, >=latex] node{\texttt{close door}} (s4);
    \draw (s4) edge[->, >=latex] node{\texttt{go to store}} (s5);
    \draw (s0) edge[->, dashed, color=gray, out=180, in=270, looseness=5] node[below]{\texttt{noop}} (s0);
    \draw (s3) edge[->, dashed, color=gray, out=180, in=270, looseness=5] node[below]{\texttt{noop}} (s3);
    \draw (s4) edge[->, dashed, color=gray, out=180, in=90, looseness=5] node[below]{\texttt{noop}} (s4);
\end{tikzpicture}
\caption{MDP for Example \ref{ex:door}. 
States: $x_H$ - agent at the house, $x_{OV}$ - agent outside the house with door open and vase intact, $x_{OB}$ - agent outside the house with door open and vase broken, $x_{SB}$ (terminal) - agent at the store with vase broken, $x_{CV}$ - agent outside the house with door closed and vase intact, $x_{SV}$ (terminal)  - agent at the store with vase intact.}
\label{fig:door}
\end{figure}
\end{minipage}
\end{figure}

\begin{example}[Door]\label{ex:door}
Consider the MDP shown in Figure \ref{fig:door}, where the baseline policy takes \texttt{noop} actions. The agent starts at the house ($x_H$) and its task is to go to the store. To leave the house, the agent needs to open the door ($x_{OV}$). The baseline policy in $x_{OV}$ leaves the door open, which leads to the wind knocking over a vase in the house ($x_{OB}$). To avoid this, the agent needs to deviate from the baseline policy by closing the door ($x_{CV}$).
\end{example}


The stepwise mode will incentivize the agent to leave the door open and go to $x_{SB}$. The inaction rollout at $x_H$ penalizes the agent for the predicted delayed effect of breaking the vase when it opens the door to go to $x_{OV}$. The agent receives this penalty whether or not it leaves the door open. Once the agent has reached $x_{OV}$, the broken vase becomes the default outcome in $x_{OB}$, so the agent is not penalized. Thus, the stepwise mode gives the agent no incentive to avoid leaving the door open, while the initial mode compares to $s'_T=x_H$ (where the vase is intact) and thus gives an incentive to close the door.


Thus, we recommend applying the baseline policy in the initial mode rather than the stepwise mode, in order to reliably avoid delayed side effects. We discuss further considerations on this choice in Appendix \ref{app:offsetting}.

\section{Experiments} \label{sec:experiments}

\subsection{Environments}

We use gridworld environments shown in Figure \ref{fig:gridworlds} to test for interference (\texttt{Sushi}) and side effects (\texttt{Vase}, \texttt{Box}, and \texttt{Soko-coin}). These simple environments clearly illustrate the desirable and undesirable behaviors, which would be more difficult to isolate in more complex environments. In all environments, the agent can go in the 4 directions or take a \texttt{noop} (stay put), and receives a reward of 1 for reaching a goal state (e.g. collecting a coin).

\texttt{Sushi} (Figure \ref{fig:sushi_gridworld}). This environment~\citep{Krakovna19} is a conveyor belt sushi restaurant, with a conveyor belt that moves to the right by one square after every agent action. There is a sushi dish on the belt that is eaten by a human if it reaches the end of the belt. The interference behavior is to move the sushi off the belt. The desired behavior is to take the fastest path to the goal, which does not interfere with the sushi. The undesired behavior is to take a longer path to the goal that interferes with the sushi.

\texttt{Vase} (Figure \ref{fig:vase_world_gridworld}). In this environment~\citep{Leech18}, there is a vase on the shortest path to the goal, so the fastest path involves a side effect (colliding with the vase). The desired behavior is to take a longer path to the goal around the vase. 

\texttt{Box} (Figure \ref{fig:box_gridworld}). This environment~\citep{Leike17} contains a box that needs to be pushed out of the way for the agent to reach the goal. The undesired behavior is taking the shortest path to the goal, which involves pushing the box down into a corner (an irrecoverable position). The desired behavior is to take a slightly longer path in order to push the box to the right. Both of these paths require irreversible actions: if the box is moved to the right, the agent can move it back, but then the agent ends up on the other side of the box, so the starting state is unreachable.

\texttt{Soko-coin} (Figures \ref{fig:coins_gridworld2}, \ref{fig:coins_gridworld3}). We modified the classic Sokoban environment to include possible side effects. These Sokoban variants contain $N$ boxes and $N$ coins for $N=2,3$, and the agent's goal is to collect all the coins (with a reward of 1 each). The boxes are obstacles on the way to the coins, and the agent can put them in irreversible positions in order to get to the coins faster, which is a side effect. The coins disappear when they are collected, so irreversible actions are required to complete the task.

\begin{figure}[ht]
\centering
\begin{subfigure}[t]{0.14\textwidth}
\centering
\includegraphics[width=\textwidth]{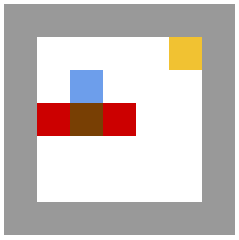}
\caption{\texttt{Sushi}}
\label{fig:sushi_gridworld}
\end{subfigure}
\begin{subfigure}[t]{0.117\textwidth}
\centering
\includegraphics[width=\textwidth]{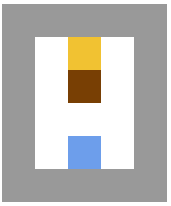}
\caption{\texttt{Vase}}
\label{fig:vase_world_gridworld}
\end{subfigure}
\begin{subfigure}[t]{0.14\textwidth}
\centering
\includegraphics[width=\textwidth]{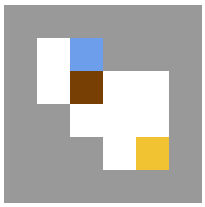}
\caption{\texttt{Box}}
\label{fig:box_gridworld}
\end{subfigure}
\begin{subfigure}[t]{0.2\textwidth}
\centering
\includegraphics[width=\textwidth]{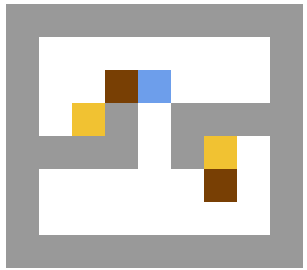}
\caption{\texttt{Soko-coin-2}}
\label{fig:coins_gridworld2}
\end{subfigure}
\begin{subfigure}[t]{0.17\textwidth}
\centering
\includegraphics[width=\textwidth]{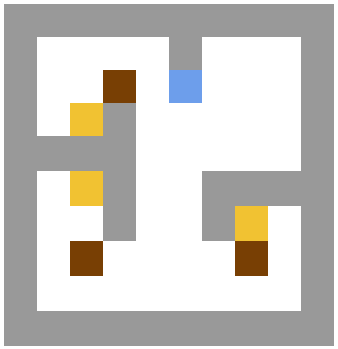}
\caption{\texttt{Soko-coin-3}}
\label{fig:coins_gridworld3}
\end{subfigure}
\begin{subfigure}[t]{0.19\textwidth}
\centering
\includegraphics[width=\textwidth]{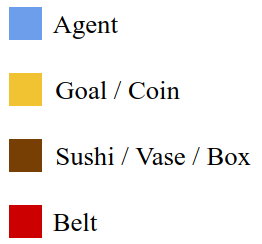}
\end{subfigure}
\caption{Gridworld environments.}
\label{fig:gridworlds}
\end{figure}

\subsection{Setup}


We compare the following approaches: no auxiliary reward, reversibility reward, and future task reward with and without a baseline policy. For each approach, we run a Q-learning agent that learns the auxiliary reward as it explores the environment.
We approximate the future task auxiliary reward using a sample of $10$ possible future tasks. 
We approximate the baseline policy by sampling from the agent's experience of the outcome of the \texttt{noop} action, and assuming the state stays the same in states where the agent has not taken a \texttt{noop} yet. This gives similar results on our environments as using an exact baseline policy computed using a model of the \texttt{noop} action. 

We compare an exact implementation of the future task auxiliary reward with a scalable UVFA approximation~\citep{Schaul15}.
The UVFA network computes the value function given a goal state (corresponding to a future task). It consists of two sub-networks, an origin network and a goal network, taking as input the starting and goal states respectively, with each subnetwork computing a representation of its input state. The overall value is then computed by taking a dot product of the representations and applying a sigmoid function.
The two networks have one hidden layer of size $30$ and an output layer of size $5$. This configuration was chosen using a hyperparameter search over number of layers ($1,2,3$), hidden layer size ($10, 30, 50, 100, 200$), and representation size ($5, 10, 50$).
For each transition $(x, a, y)$ from state $x$ to state $y$, we perform a Bellman update of the value function estimate $V_s$ for a random sample of $10$ goal states $s$, with the following loss function: $\sum_s [\gamma \max_{b\in\actions} V_s(y, b) - V_s(x,a)]$. 
We sample the $10$ goal states from a stored set of 100 states encountered by the agent. Whenever a state is encountered that is not in the stored set, it randomly replaces a stored state with probability $0.01$.

Exact and UVFA agents have discount rates of $0.99$ and $0.95$ respectively. For each agent, we do a grid search over the scaling parameter $\beta$ ($0.3, 1, 3, 10, 30, 100, 300, 1000$), choosing the highest value of $\beta$ that allows the agent to receive full reward on the current task for exact agents (and at least $90\%$ of the full reward for UVFA agents). We anneal the exploration rate linearly from 1 to 0, and keep it at 0 for the last $1000$ episodes. We run the agents for 50K episodes in all environments except \texttt{Soko-coin}, where we run exact agents for 1M episodes and UVFA agents for 100K episodes. The runtime of the UVFA agent (in seconds per episode) was $0.2$ on the \texttt{Soko-coin} environments. Only the UVFA approximation was feasible on \texttt{Soko-coin-3}, since the exact method runs out of memory.

\subsection{Results}

\begin{table}[t]
\centering
\caption{Results on the gridworld environments for Q-learning with no auxiliary reward (None), reversibility reward, and future task (FT) reward (exact or UVFA, with or without a baseline policy). The average number of interference behaviors per episode is shown for the \texttt{Sushi} environment, and the average number of side effects per episode is shown for the other environments (with high levels in red and low levels in blue). The results are averaged over the last 1000 episodes, over 10 random seeds for exact agents and 50 random seeds for UVFA agents.}
\label{tab:results}
\setlength{\tabcolsep}{3pt}
\begin{tabular}{lccccc}
&Interference & \multicolumn{4}{c}{Side effects} \\
\cmidrule(lr){2-2} \cmidrule(lr){3-6}
Auxiliary reward & \texttt{Sushi} & \texttt{Vase} & \texttt{Box} & \texttt{Soko-coin-2} & \texttt{Soko-coin-3} \\\hline
None  & \good & \bad & \bad  & \textcolor{darkred}{2} & \textcolor{darkred}{3} \\
Reversibility  & \good & \good   & \bad & \textcolor{darkred}{2}  &  \textcolor{darkred}{3} \\
FT (no baseline, exact) & \bad & \good  & \good & \good &  \\ 
FT (baseline, exact) & \good & \good  & \good & \good & \\ 
FT (no baseline, UVFA) & 
\textcolor{darkred}{0.64 $\pm$ 0.07} & \textcolor{darkblue}{0.12 $\pm$ 0.05} &  \textcolor{darkblue}{0.22 $\pm$ 0.06} & \textcolor{darkblue}{0.53 $\pm$ 0.09} & \textcolor{darkblue}{1.04 $\pm$ 0.12} \\ 
FT (baseline, UVFA) & 
\textcolor{darkblue}{0.05 $\pm$ 0.01} & \textcolor{darkblue}{0.12 $\pm$ 0.05} &  \textcolor{darkblue}{0.22 $\pm$ 0.06} & \textcolor{darkblue}{0.53 $\pm$ 0.09} & \textcolor{darkblue}{1.04 $\pm$ 0.12}  \\
\hline
\end{tabular}
\end{table}


\texttt{Sushi}. 
The agent with no auxiliary reward has no incentive to interfere with the sushi and goes directly to the goal. Since the starting state is unreachable no matter what the agent does, the reversibility reward is always 0, so it does not produce interference behavior. The future task agent with no baseline interferes with the sushi, while the agent with a baseline goes directly to the goal. 

\texttt{Vase}. 
Since the agent can get to the goal without irreversible actions, both the reversibility and future task methods avoid the side effect on this environment, while the regular agent breaks the vase. 


\texttt{Box}.
The reversibility agent loses its auxiliary reward no matter how it moves the box, so it takes the fastest path to the goal that pushes the box in the corner (similarly to the regular agent). However, the future task agent pushes the box to the right, since some future tasks involve moving the box. 

\texttt{Soko-coin}.
Since the reversibility agent loses its auxiliary reward by collecting coins, it pushes boxes next to walls to get to the coins faster (similarly to the regular agent). However, the future task agent goes around the boxes to preserve future tasks that involve moving boxes. 

The results are shown in Table \ref{tab:results}. Each exact Q-learning agent converged to the optimal policy given by value iteration for the corresponding auxiliary reward. Only the future task approach with the baseline policy does well on all environments, avoiding side effects as well as interference. While the UVFA approximation of the future task auxiliary reward avoids side effects less reliably than the exact version, it shows some promise for scaling up the future task approach.

\section{Other related work}


\textbf{Side effects criteria using state features.} Minimax-regret querying \cite{Zhang18} assumes a factored MDP where the agent is allowed to change some of the features and proposes a criterion for querying the supervisor about changing other features in order to allow for intended effects. RLSP \cite{Shah19} defines an auxiliary reward for avoiding side effects in terms of state features by assuming that the starting state of the environment is already organized according to human preferences. While these approaches are promising, they require a set of state features in order to compute the auxiliary reward, which increases the burden on the reward designer. 

\textbf{Empowerment.}
The future task approach is related to \emph{empowerment} \citep{Klyubin05,Salge14}, a measure of the agent's control over its environment. Empowerment is defined as the maximal mutual information between the agent's actions and the future state, and thus measures the agent's ability to reliably reach many states. Maximizing empowerment would encourage the agent to avoid irreversible side effects, but would also incentivize interference, and it is unclear to us how to define an empowerment-based measure that would avoid this. One possibility is to penalize the reduction in empowerment between the current state $\curT$ and the baseline $s'_T$. However, empowerment is indifferent between these two cases: A) the same states are reachable from $\curT$ and $\baseT$, and B) a state $x$ is reachable from $\baseT$ but not from $\curT$, while another state $y$ is reachable from $\curT$ but not from $\baseT$. Thus, penalizing reduction in empowerment would miss some side effects: e.g. if the agent replaced the sushi on the conveyor belt with a vase, empowerment could remain the same, so the agent is not penalized for breaking the vase. 

\textbf{Safe exploration.} While the safe exploration problem may seem similar to the side effects problem, safe exploration is about avoiding harmful actions during the training process (a learning problem), while the side effects problem is about removing the incentive to take harmful actions (a reward design problem). Many safe exploration methods work by changing the agent's incentives, and thus can potentially address the side effects problem. This includes reversibility methods~\citep{Eysenbach17}, which avoid side effects in tasks that don't require irreversible actions. Safe exploration methods that penalize risk~\citep{Chow15} or use intrinsic motivation~\citep{Lipton16} help the agent avoid side effects that result in lower reward (such as getting trapped or damaged), but do not discourage the agent from damaging the environment in ways that are not penalized by the reward function (e.g. breaking vases). Thus, safe exploration methods offer incomplete solutions to side effects, just as side effects methods provide incomplete solutions to safe exploration. These methods can be combined if desired to address both problems. 




\textbf{Uncertainty about the objective.} Inverse Reward Design~\citep{Hadfield-Menell17} incorporates uncertainty about the objective by considering alternative reward functions that are consistent with the given reward function in the training environment, and following a risk-averse policy.
This helps avoid side effects that stem from distributional shift, where the agent encounters a new state that was not seen during training. However, along with avoiding harmful new states, the agent also avoids beneficial new states. 
Another uncertainty method is quantilization~\citep{Taylor2016quant}, which incorporates uncertainty by sampling from the top quantile of actions rather than taking the optimal action. This approach does not consistently remove the incentive for side effects, since harmful actions will still be sampled some of the time.


\textbf{Human oversight.} An alternative to specifying an auxiliary reward is to teach the agent to avoid side effects through human oversight, such as inverse reinforcement learning~\citep{Ng00,Hadfield-Menell16}, demonstrations~\citep{Abbeel04}, or human feedback~\citep{Christiano17,Saunders17}.
It is unclear how well an agent can learn a reward for avoiding side effects from human oversight. We expect this to depend on the diversity of settings in which it receives oversight and its ability to generalize from those settings, while an intrinsic reward for avoiding side effects would be more robust and reliable. Such an auxiliary reward could also be combined with human oversight to decrease the amount of human input required for an agent to learn human preferences, e.g. if used as a prior for the learned reward function.

\section{Conclusions}

To address the challenge of defining what side effects are, we have proposed a approach where a definition of side effects is automatically implied by the simpler definition of future goals, which lays a theoretical foundation for formalizing the side effects problem.
This approach provides an auxiliary reward for preserving the ability to perform future tasks that incentivizes the agent to avoid side effects, whether or not the current task requires irreversible actions, and does not introduce interference incentives for the agent. 

There are many possible directions for follow-up work, which include improving the UVFA approximation of the future task reward to more reliably avoid side effects, applying the method to more complex agents and environments, generalizing interference avoidance to the stochastic case, investigating the choice of future task distribution $F$
(e.g. incorporating human preferences by learning the task distribution through human feedback methods~\citep{Christiano17}), and  investigating other possible undesirable incentives that could be introduced besides interference incentives.


\section*{Broader impact}

In present-day reinforcement learning, what the agent should not do is usually specified manually, e.g. through constraints or negative rewards. This ad-hoc approach is unlikely to scale to more advanced AI systems in more complex environments. Ad-hoc specifications are usually incomplete, and more capable AI systems will be better at finding and exploiting gaps and loopholes in the specification. 
We already see many examples of specification gaming with present-day AI systems, and this problem is likely to get worse for more capable AI systems~\citep{Krakovna20}. 

We think that building and deploying more advanced AI systems calls for general approaches and design principles for specifying agent objectives. Our paper makes progress on developing such a general principle, which aims to capture the heuristic of ``do no harm'' in terms of the available options in the environment, and gives the agent an incentive to consider the future consequences of its actions beyond the current task. 

Without a reliable and principled way to avoid unnecessary changes to the world, the deployment of AI systems will be limited to narrow domains where the designer can enumerate everything the agent should not do. Thus, general approaches to objective specification would enable society to reap the benefits of applying capable AI systems to more difficult problems, which has potential for high long-term impact.

In terms of negative impacts, adding an auxiliary reward for future tasks increases the computational requirements and thus the energy cost of training reinforcement learning algorithms, compared to hand-designed rewards and constraints for avoiding side effects. The remaining gaps in the theoretical foundations of our method could lead to unexpected issues if they are not researched properly and instead left to empirical evaluation. 

\section*{Acknowledgements}

We thank Ramana Kumar for detailed and constructive feedback on paper drafts and code. We also thank Jonathan Uesato, Matthew Rahtz, Alexander Turner, Carroll Wainwright, Stuart Armstrong, and Rohin Shah for helpful feedback on drafts.

\bibliography{refs}

\clearpage

\appendix

\setcounter{proposition}{0}
\setcounter{example}{0}
\setcounter{figure}{0}
\renewcommand\thefigure{\thesection.\arabic{figure}}

\section{Example \ref{ex:inter}}\label{app:inter}

Consider a deterministic MDP with two states $x_0$ and $x_1$ and two actions \texttt{a} and \texttt{b}, where $x_0$ is the initial state. 
Suppose the baseline policy $\pi'$ always chooses action \texttt{a}. 

\begin{center}
\vspace{-0.5em}
\begin{tikzpicture}[auto]
    \node (s0) at (0, 0) {$x_0$};
    \node (s1) at (2, 0) {$x_1$};
    \draw (s0) edge[->, >=latex] node{\texttt{a}} (s1);
    \draw (s0) edge[->, out=180, in=270, looseness=5] node[left]{\texttt{b}} (s0);
    \draw (s1) edge[->, out=180, in=270, looseness=5] node[left]{\texttt{a},\texttt{b}} (s1);
\end{tikzpicture}
\vspace{-0.5em}
\end{center}

We show that for most future task distributions, the future task auxiliary reward induces an interference incentive: staying in $x_0$ has a higher no-reward value than following the baseline policy. 

The optimal value function $\val_i$ for future task $i$ with goal state $x_i$ ($i\in\{0,1\}$) is $\val_i(x_i)=1$, $\val_1(x_0)=\gamma$ and $\val_0(x_1)=0$. 
Then the no-reward value function for the baseline policy is
$$\vnr_{\pi'}(x_0) = \raux(x_0) + (\gamma+\gamma^2+\dots)\raux(x_1) 
=  \raux(x_0) + \frac{\gamma}{1-\gamma} \cdot (1-\gamma) \beta F(1)
= \raux(x_0) + \gamma \beta F(1)$$
and the no-reward value function for the policy $\pi_\inter$ that always takes action $b$ is
$$\vnr_{\pi_\inter}(x_0) = \raux(x_0) + \frac{\gamma}{1-\gamma} \raux(x_0) \
=  \raux(x_0) + \gamma \beta (F(0) + \gamma F(1)) $$
The future task agent has an interference incentive if the baseline policy is not optimal for the future task reward, i.e. $\vnr_{\pi'}(x_0) < \vnr_{\pi_\inter}(x_0)$. 
This happens iff 
$F(0) > (1-\gamma)F(1)$, i.e. if the task distribution does not highly favor task 1 over task 0.

\section{Proofs}\label{app:proofs}

\subsection{Proof of Proposition 1}\label{app:prop1}

\begin{proposition}[Value function convergence]
The following formula for the optimal value function satisfies the goal condition $\val_i(\cur,\cur') = \rew_i(\cur,\cur') = 1$ and Bellman equation (\ref{eq:bellman}):
\begin{equation}
\val_i(\cur,\cur') = \expect \left[\gamma^{\max(N_i(\cur),N_i(\cur'))}\right]
= \sum_{n=0}^\infty \prob(N_i(\cur)=n) \sum_{n'=0}^\infty \prob(N_i(\cur')=n') \gamma^{\max(n,n')} \label{eq:closed}
\end{equation}
\end{proposition}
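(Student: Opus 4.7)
The plan is a direct algebraic verification that formula (\ref{eq:closed}) satisfies both the goal condition and the Bellman equation (\ref{eq:bellman}).

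The goal condition is immediate: when $\cur = \cur' = \goal$, both $N_i(\cur)$ and $N_i(\cur')$ are identically $0$, so the formula evaluates to $\gamma^{\max(0,0)} = 1 = r_i(\goal,\goal)$. For the Bellman step, I would use two ingredients. First, the algebraic identity $\max(1+n,\, 1+n') = 1 + \max(n,n')$, which lets us pull a factor of $\gamma$ out of $\gamma^{\max(\cdot,\cdot)}$. Second, the distributional recursion $N_i(\cur) \stackrel{d}{=} 1 + N_i(\nxt)$, where $\nxt$ is drawn from $p(\cdot \mid \cur, \pi_i^*(\cur))$ (and symmetrically $N_i(\cur') \stackrel{d}{=} 1 + N_i(\nxt')$ with $\nxt' \sim p(\cdot \mid \cur', a')$). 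Since the two agents transition independently given their current states, the joint distribution factors, and combining these facts yields
\[
\val_i(\cur,\cur') \;=\; \gamma \sum_{\nxt} p(\nxt \mid \cur, \pi_i^*(\cur)) \sum_{\nxt'} p(\nxt' \mid \cur', a')\, \val_i(\nxt, \nxt'),
\]
which is the RHS of (\ref{eq:bellman}) evaluated at $a = \pi_i^*(\cur)$ (with $r_i=0$ since at least one of $\cur,\cur'$ is not $\goal$).

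What remains is to show that this choice $a = \pi_i^*(\cur)$ actually attains the maximum over $a$ in (\ref{eq:bellman}). This is the main obstacle. In the deterministic case the argument is clean: $\pi_i^*$ minimizes $N_i(\nxt)$ pointwise, hence maximizes $\gamma^{\max(N_i(\nxt), N_i(\nxt'))}$ for every realization of $N_i(\nxt')$, so pointwise optimality implies optimality in expectation. In the stochastic case one cannot argue pointwise; instead, one leverages the optimality of $\pi_i^*$ for the single-agent value $\expect[\gamma^{N_i}]$ together with the fact that $\val_i(\nxt,\nxt') = \expect[\min(\gamma^{N_i(\nxt)}, \gamma^{N_i(\nxt')})]$ is monotone in the distribution of $N_i(\nxt)$ under first-order stochastic dominance. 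Because $\pi_i^*$ induces a distribution over $N_i(\nxt)$ that stochastically dominates that of any other action, it remains optimal for the joint min-based objective.

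Finally, I would dispatch the corner cases in which exactly one of $\cur,\cur'$ equals $\goal$. If $\cur = \goal$ but $\cur' \neq \goal$, the agent is absorbed at $\goal$, so $\nxt = \goal$ deterministically under any action, $N_i(\nxt) \equiv 0$, and the joint Bellman recurrence collapses to the single-agent Bellman equation for $\pi_i^*$ applied to the reference trajectory, which holds by definition of $\pi_i^*$. The symmetric case is identical. Together with the two main cases above, this covers every state pair and completes the verification.
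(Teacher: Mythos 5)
Your verification follows essentially the same route as the paper's proof: the goal condition is immediate from $N_i(\goal)=0$, and the Bellman step is the first-transition decomposition of the goal distance combined with the identity $\max(n+1,n'+1)=1+\max(n,n')$ to pull one factor of $\gamma$ out of $\gamma^{\max(\cdot,\cdot)}$, with the corner cases (exactly one of $\cur,\cur'$ equal to $\goal$) collapsing to the single-agent Bellman equation. The one place you go beyond the paper is that you explicitly isolate the step of showing that the choice $a=\pi_i^*(\cur)$ attains the $\max_{\act\in\actions}$ in (\ref{eq:bellman}); the paper folds this silently into its reindexing identity (\ref{eq:reindex}), which is simply written with the max in place.

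On that added step: your deterministic argument is sound (pointwise minimization of $N_i(\nxt)$ maximizes $\gamma^{\max(N_i(\nxt),N_i(\nxt'))}$ for every realization of $N_i(\nxt')$). But your stochastic-case justification contains a genuine gap: it is not true in general that the hitting-time distribution induced by $\pi_i^*$ first-order stochastically dominates that induced by every other action. Optimality of $\pi_i^*$ only says it maximizes $\expect[\gamma^{N_i}]$. For instance, an action whose successor yields $N_i=0$ or $N_i=100$ with probability $1/2$ each beats a sure $N_i=3$ when $\gamma$ is small, yet neither distribution dominates the other, and under the truncated objective $\expect[\gamma^{\max(N_i+1,k)}]$ with large $k$ (exactly the kind of integrand appearing in the joint value when the reference agent is far from $\goal$) the ranking reverses. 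So stochastic dominance cannot be invoked, and without it the claim that $\pi_i^*$ attains the max in (\ref{eq:bellman}) remains unproven for stochastic transitions. To be fair, the paper's own proof does not close this point either — it asserts (\ref{eq:reindex}) with the max directly — and the argument is airtight only in the deterministic case, which is also the regime of Proposition 2; but as written, your dominance claim is a false lemma rather than an acknowledged restriction, and you should either restrict the argmax step to deterministic dynamics or supply a genuinely different argument for the stochastic case.
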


\begin{proof}




\emph{Goal condition}. If both agents have reached the goal state ($\cur=\base=\goal$), then $N_i(\cur)=N_i(\cur')=0$, so formula (\ref{eq:closed}) gives $\val_i(\cur,\cur')=1$ as desired.

\emph{Bellman equation}. We show that formula (\ref{eq:closed}) is a fixed point for the Bellman equation (\ref{eq:bellman}).

If $s_t \not= \goal$, and thus $\prob(N_i(\cur)=0)=0$, we note that
\begin{align}
\sum_{m=0}^\infty \prob(N_i(\cur)=m) \gamma^{\max(m,k)} 
=& \sum_{n=0}^\infty \prob(N_i(\cur)=n+1) \gamma^{\max(n+1,k)} \nonumber \\ 
=&\max_{\act \in \actions} \sum_{\nxt} \trans(\nxt |\cur, \act) \sum_{n=0}^\infty  \prob(N_i(\nxt)=n) \gamma^{\max(n+1,k)}
\label{eq:reindex}
\end{align}
if we decompose the trajectory to the goal state into the first transition $(\cur, \act, \nxt)$ and the rest of the trajectory starting from $\nxt$. This holds analogously for the reference transition $(\cur', \act', \nxt')$.

Now we plug in formula (\ref{eq:closed}) on the right side (RS) of the Bellman equation. If $\cur \not= \goal$ and $\cur' \not= \goal$:
\begin{align*}
RS =& r_i(\cur, \cur') + \gamma \max_{\act \in \actions} \sum_{\nxt \in\states} \trans(\nxt |\cur, \act) 
\sum_{\nxt'\in\states} \trans(\nxt' |\cur', \act') \val_i(\nxt, \nxt')\\
=& 0 + \gamma \max_{\act \in \actions} \sum_{\nxt \in\states} \trans(\nxt |\cur, \act)
 \sum_{\nxt'\in\states} \trans(\nxt' |\cur', \act') \cdot\\
&\quad\sum_{n=0}^\infty  \prob(N_i(\nxt)=n) \sum_{n'=0}^\infty  \prob(N_i(\nxt')=n') \gamma^{\max(n,n')}  \quad \text{[plugging in (\ref{eq:closed})]}\\
=& \max_{\act \in \actions} \sum_{\nxt \in\states} \trans(\nxt |\cur, \act) \sum_{n=0}^\infty  \prob(N_i(\nxt)=n) \cdot 
\quad \text{[moving $\gamma$, rearranging sums]} \\
&\quad \sum_{\nxt'\in\states} \trans(\nxt' |\cur', \act')  
 \sum_{n'=0}^\infty  \prob(N_i(\nxt')=n') \gamma^{\max(n+1,n'+1)}  \\
=&  \max_{\act \in \actions} \sum_{\nxt \in\states} \trans(\nxt |\cur, \act) \sum_{n=0}^\infty  \prob(N_i(\nxt)=n) \cdot \\
 &\quad\sum_{m'=0}^\infty  \prob(N_i(\cur')=m') \gamma^{\max(n+1,m')} \quad \text{[using (\ref{eq:reindex}) with $m'=n'+1$]}\\
=& \sum_{m=0}^\infty  \prob(N_i(\cur)=m) \sum_{m'=0}^\infty  \prob(N_i(\cur')=m') \gamma^{\max(m, m')} \quad \text{[using (\ref{eq:reindex}) with $m=n+1$]}
\end{align*}
which is the same as plugging in formula (\ref{eq:closed}) on the left side.

If $\cur'=\goal$, we have:
\begin{align*}
RS =& r_i(\cur, \goal) + \gamma \max_{\act \in \actions} \sum_{\nxt \in\states} \trans(\nxt |\cur, \act)  \val_i(\nxt, \goal)\\
=& 0 + \gamma \max_{\act \in \actions} \sum_{\nxt \in\states} \trans(\nxt |\cur, \act)
\sum_{n=0}^\infty  \prob(N_i(\nxt)=n)  \gamma^{\max(n,0)} \quad \text{[plugging in (\ref{eq:closed})]}\\
=& \sum_{m=0}^\infty  \prob(N_i(\nxt)=m) \gamma^{\max(m,0)} \quad \text{[using (\ref{eq:reindex}) with $m=n+1$]}
\end{align*}

If $\cur=\goal$, we have:

\begin{align*}
RS =& r_i(\goal, \base) + \gamma  \sum_{\nxt' \in\states} \trans(\nxt |\base, \act')  \val_i(\goal, \nxt')\\
=& 0 + \gamma  \sum_{\nxt' \in\states} \trans(\nxt' |\base, \act') \sum_{n'=0}^\infty  \prob(N_i(\nxt')=n')  \gamma^{\max(0, n')} \quad \text{[plugging in (\ref{eq:closed})]}\\
=& \sum_{m'=0}^\infty  \prob(N_i(\nxt')=m') \gamma^{\max(0,m')} \quad \text{[using (\ref{eq:reindex}) with $m'=n'+1$]}
\end{align*}
which is the same as plugging in formula (\ref{eq:closed}) on the left side.
\end{proof}

\subsection{Proof of Proposition 2}\label{app:prop2}

\begin{proposition}[Avoiding interference]
For any policy $\pi$ in a deterministic environment, the baseline policy $\pi'$ has the same or higher no-reward value: $\vnr_{\pi}(\start) \leq \vnr_{\pi'}(\start)$.
\end{proposition}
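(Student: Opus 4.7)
The plan is to reduce the entire statement to one pointwise inequality on the per-step auxiliary reward and then sum. Since the environment is deterministic and the baseline policy $\pi'$ is fixed, the sequence of baseline states $s'_0, s'_1, s'_2, \dots$ depends only on $s_0$ and $\pi'$, not on what the agent does; in particular, when the agent itself follows $\pi = \pi'$, its realized trajectory coincides with the baseline trajectory step by step. So I only need to compare, at each time $T$, the auxiliary reward the agent receives under $\pi$ against what it would have received had it been at $s'_T$ instead.

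First, I would apply Proposition~1 in the deterministic case. Writing $n = N_i(\cur)$ and $n' = N_i(\base)$, the expectations collapse and
\[
\val_i(\cur, \base) \;=\; \gamma^{\max(n,n')} \;=\; \min(\gamma^n, \gamma^{n'}) \;=\; \min\bigl(\val_i(\cur), \val_i(\base)\bigr),
\]
so in particular $\val_i(\cur, \base) \leq \val_i(\base) = \val_i(\base,\base)$ for every state pair, with equality whenever $\cur = \base$. Averaging over $i \sim F$ and multiplying through by $\beta D(\cur)$ transfers this to a pointwise bound on $\raux$, at least after one checks that the $D$ factor does not spoil things (see below).

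Second, I would unfold the no-reward value as a deterministic series,
\[
\vnr_\pi(\start) \;=\; \sum_{T=0}^\infty \gamma^T \, \raux(s_T, s'_T) \;=\; \beta \sum_{T=0}^\infty \gamma^T D(s_T) \sum_i F(i)\, \val_i(s_T, s'_T),
\]
and similarly for $\pi'$, where in the latter $s_T$ is replaced by $s'_T$ everywhere. The inequality $\val_i(s_T, s'_T) \leq \val_i(s'_T)$ then gives a term-by-term domination, and summing proves $\vnr_\pi(\start) \leq \vnr_{\pi'}(\start)$, with equality attained by the agent that actually executes $\pi'$.

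The main obstacle is the bookkeeping around the per-step factor $D(\cur)$ and episode termination, since in principle $\pi$ could reach a terminal state at a different time than $\pi'$, changing the effective horizon of the sum. The clean way to absorb this is to interpret $D$ as the within-step termination probability (as the paper does when introducing $\raux$), so that every $T$ at which the sum has not yet been truncated contributes the per-step inequality $D(s_T)\val_i(s_T,s'_T) \leq D(s'_T)\val_i(s'_T)$, once one verifies the two termination regimes agree: the $1-\gamma$ case is symmetric, and the genuinely terminal case forces $\cur = \base$ at the termination step under $\pi'$ so no extra $\pi$-only term can gain value from a premature terminal reward. With that verified, the term-by-term comparison and summation go through as sketched.
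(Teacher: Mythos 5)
Your argument is essentially the paper's own proof: both unfold $\vnr$ as a discounted series of per-step auxiliary rewards, invoke the deterministic closed form from Proposition 1 to get the pointwise bound $\val_i(s_T,s'_T)=\gamma^{\max(N_i(s_T),N_i(s'_T))}\leq\gamma^{N_i(s'_T)}=\val_i(s'_T,s'_T)$, and conclude by term-by-term domination of the two series. Your extra bookkeeping around the factor $D$ and episode termination is more careful than the paper, which simply takes $D=1-\gamma$ at every step (treating termination via the discount-factor interpretation); the core reduction is identical.
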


\begin{proof} Suppose policy $\pi$ is in state $s_k$ at time $k$. Then,

\begin{align*}
\vnr_{\pi}(\start) 
=& \sum_{T=0}^\infty \gamma^T \raux(s_k) \\
=& \sum_{T=0}^\infty  \gamma^T (1-\gamma) \beta \sum_i F(i) \val_i(s_T, s'_T) \\
=& \sum_{T=0}^\infty  \gamma^T (1-\gamma) \beta \sum_i F(i) \gamma^{\max(N_i(s_T),N_i(s'_T))} \\
\leq&  \sum_T \gamma^T (1-\gamma) \beta \sum_i F(i) \gamma^{N_i(s'_T)} \\
=& \sum_{T=0}^\infty  \gamma^T (1-\gamma) \beta \sum_i F(i) \val_i(s'_T, s'_T) \\
=& \sum_{T=0}^\infty  \gamma^T \raux(s'_T) \\
=& \vnr_{\pi'}(\start) 
\end{align*}
\end{proof}

\section{Interference in the stochastic case}\label{app:stochastic}

If the environment is stochastic, the outcome of running the baseline policy varies. For example, suppose the agent is in a room, following a baseline policy of doing nothing (staying in one location in the room). A human walks into the room, and 10\% of the time they knock over a vase. We want the agent to avoid interfering with the human in those 10\% of cases, and also to avoid breaking the vase the other 90\% of the time. This indicates that we want to compare the agent's effects to a specific counterfactual (either the human was going to walk in or not) rather than an average of all possible counterfactuals sampled from the stochastic environment (10\% probability of the human walking in). Thus, we would like the baseline policy to be optimal for any deterministic instantiation of the stochastic environment (e.g. by conditioning on the random seed). Then the auxiliary reward, which is a function of the baseline state, will depend on the random seed. 

Thus, the definition of interference could be refined as follows: there is an interference incentive if the baseline policy is not optimal from the initial state, conditioning on the random seed of the environment. However, it is unclear how this could be implemented in practice outside simulated environments.

\section{Stepwise application of the baseline policy}\label{app:stepwise}

\subsection{Inaction rollouts}\label{app:rollouts}

\begin{figure}[h]
\centering
\begin{tikzpicture}[auto]
    \node (cur) at (-1.5, 0) {$\curT$};
    \node (base) at (-0.5, 0) {$s'_T$};
    \node (base1) at (0, -1) {$\tilde s'_{T+1}$};
    \node (base2) at (0.5, -2) {$\tilde s'_{T+2}$};
    \node (basedots) at (1, -3) {$\dots$};
    \node (basevase) at (-0.1, 0) {\includegraphics[scale=\sizeA]{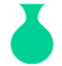}};
    \node (base1vase) at (0.5, -1) {\includegraphics[scale=\sizeA]{environments/vase.png}};
    \node (base2vase) at (1, -2) {\includegraphics[scale=\sizeA]{environments/vase.png}};
    \node (cur1) at (-1, -1) {$\tilde \nxtT$};
    \node (cur2) at (-0.5, -2) {$\tilde s_{T+2}$};
    \node (curdots) at (0, -3) {$\dots$};
    \node (curvase) at (-1.9, 0) {\includegraphics[scale=\sizeA]{environments/vase.png}};
    \node (cur1vase) at (-1.5, -1) {\includegraphics[scale=\sizeA]{environments/vase.png}};
    \node (cur2vase) at (-1, -2) {\includegraphics[scale=\sizeA,angle=270]{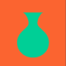}};
    \draw (base) edge[->, >=latex, thick, dashed, color=gray]  (base1);
    \draw (base1) edge[->, >=latex, thick, dashed, color=gray]  (base2);
    \draw (base2) edge[->, >=latex, thick, dashed, color=gray]  (basedots);
    \draw (cur) edge[->, >=latex, thick, dashed, color=gray]  (cur1);
    \draw (cur1) edge[->, >=latex, thick, dashed, color=gray]  (cur2);
    \draw (cur2) edge[->, >=latex, thick, dashed, color=gray]  (curdots);
\end{tikzpicture}
\caption{Inaction rollouts from the current state $\curT$ and baseline state $s'_T$, obtained by applying the baseline policy to those states: $\tilde \nxtT=\pi'(\curT)$, etc. If the previous action $a_{T-1}$ drops the vase from the building, then the vase breaks in the inaction rollout from $\curT$ but not in the inaction rollout from $s'_T$.} \label{fig:rollouts}
\end{figure}

\subsection{Offsetting incentives}\label{app:offsetting}

Unlike the initial mode, the stepwise mode avoids incentives for \emph{offsetting} behavior, where the agent undoes its own actions towards the objective~\citep{Krakovna19}. 
For example, consider a variant of the \texttt{Sushi} environment without a goal state, where the object on the belt is a vase that falls off and breaks if it reaches the end of the belt, and agent receives a reward for taking the vase off the belt. 

\begin{figure}[ht]
\centering
\includegraphics[width=.13\textwidth]{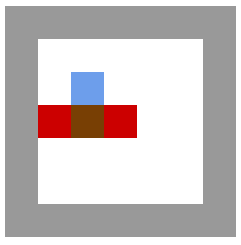}
\end{figure}

Then the initial mode gives the agent an incentive to take the vase off the belt (collecting the reward) and then offset this action by putting the vase back on the belt. 
This type of offsetting is undesirable, but as we show in Example \ref{ex:door}, some types of offsetting are desirable, e.g. for avoiding delayed side effects. The agent that closes the door can be seen as offsetting its earlier action of opening the door. 
Whether offsetting an effect is desirable depends on whether this effect is part of the task objective. In Example \ref{ex:door}, the action of opening the door is instrumental for going to the store, and many of its effects (e.g. the wind breaking the vase) are not part of the objective, so it is desirable for the agent to undo this action. In the above variant of the \texttt{Sushi} environment, the task objective is to prevent the vase from falling off the end of the belt and breaking, and the agent is rewarded for taking the vase off the belt. The effects of taking the vase off the belt are part of the objective, so it is undesirable for the agent to undo this action. 

Setting good incentives for the agent requires capturing this distinction between desirable and undesirable offsetting. The stepwise mode does not capture this distinction, since it avoids all forms of offsetting, so it does not succeed at setting good incentives. 
One possible way to capture this distinction is by using the initial mode, which allows offsetting, and relying on the task reward to represent which effects are part of the task and penalize the agent for offsetting them. While we cannot expect the task reward to capture what the agent should not do (side effects), capturing which effects are part of the task falls under what the agent should do, so it seems reasonable to rely on the reward function for this. 
This could be achieved using a state-based reward function that assigns reward to all states where the task is completed. For example, in the vase environment above, a state-based reward of 1 for states with an intact vase (or with the vase off the belt) and 0 otherwise would remove the offsetting incentive.


\end{document}